\newtheorem{theorem}{Theorem}
\newtheorem{corollary}[theorem]{Corollary}
\newtheorem{definition}[theorem]{Definition}
\newcommand{\ssection}[1]{\vspace{-0.25cm} \section{#1} \vspace{-0.3cm}}
\newcommand{\ssubsection}[1]{\vspace{-0.25cm} \subsection{#1} \vspace{-0.25cm}}
\title{The trade-offs of model size in large recommendation models : A 10000 $\times$ compressed criteo-tb DLRM model (100 GB parameters to mere 10MB)}
\author{%
  Aditya Desai \thanks{
  Department of Computer Science,
  Rice University,
  Houston, Tx 77005 }\\
  \texttt{apd10@rice.edu} \\
  \And 
  Anshumali Shrivastava \footnotemark[1] \thanks{ThirdAI Corp. Houston, Texas} \\
  \texttt{as143@rice.edu} \\
}
\begin{document}

\maketitle
\begin{abstract}
Embedding tables dominate industrial-scale recommendation model sizes, using up to terabytes of memory. A popular and the largest publicly available machine learning MLPerf benchmark on recommendation data is a Deep Learning Recommendation Model (DLRM) trained on a terabyte of click-through data. It contains 100GB of embedding memory (25+Billion parameters). DLRMs, due to their sheer size and the associated volume of data, face difficulty in training, deploying for inference, and memory bottlenecks due to large embedding tables. This paper analyzes and extensively evaluates a generic parameter sharing setup (PSS) for compressing DLRM models. We show theoretical upper bounds on the learnable memory requirements for achieving $(1 \pm \epsilon)$ approximations to the embedding table. Our bounds indicate exponentially fewer parameters suffice for good accuracy. To this end, we demonstrate a PSS DLRM reaching 10000$\times$ compression on criteo-tb without losing quality. Such a compression, however, comes with a caveat. It requires 4.5 $\times$ more iterations to reach the same saturation quality. The paper argues that this tradeoff needs more investigations as it might be significantly favorable. Leveraging the small size of the compressed model, we show a 4.3$\times$ improvement in training latency leading to similar overall training times. Thus, in the tradeoff between system advantage of a small DLRM model vs. slower convergence, we show that scales are tipped towards having a smaller DLRM model, leading to faster inference, easier deployment, and similar training times.
\end{abstract}
\ssection{Introduction}
Recently, recommendation systems have emerged as one of the largest workloads in machine learning \cite{archimpl}. Recommendation systems form the backbone of a good user experience on online platforms such as e-commerce and web search, where there is a flood of information. Thus, considerable effort goes into building recommendation systems. Deep learning recommendation models give a state-of-the-art performance. However, recommendation models suffer from a critical challenge - sparse features with millions of categorical values\cite{DLRM19, mudigere2021high} These state-of-the-art \cite{DLRM19, DCN17, song2019autoint, guo2017deepfm, huang2019fibinet, lian2018xdeepfm} methods learn a dense representation of the categorical values in a parameter structure called {\em embedding table}.



Most parameters in recommendation models come from embedding tables. For example, in the popular criteo-tb MLPerf benchmark model, the embedding tables are around 100GB, whereas other parameters only amount to 10MB. Industrial-scale recommendation models are one of the largest models built. The size of the embedding table can go as large as hundreds of terabytes. For example, a research article from Facebook discusses the training of a model of size 50TB over 128 GPUs \cite{mudigere2021high}.
The scale of these models leads to some unfavorable effects -  slower inference time, slower training time per iteration, and significant engineering challenges in training/deployment. In light of these issues, many works have investigated learning of compressed representation of embedding tables using various principles : (1) compositional embeddings \cite{QuoRemTrick19} (2) exploiting power-law in the observed frequencies of tokens \cite{MDTrick19, md1,md2,md3,md4,md5} (3)  low-rank decomposition \cite{ttrec}, and weight sharing methods \cite{hashtrick, robez}. Parameter-sharing methods will be the focus of our paper.

The general idea in machine learning has primarily shifted to larger models and more data. Larger models lead to better capacity, faster convergence, and better generalization. However, the community is realizing that the current route to DL success is unsustainable\cite{thompson2021deep}. It stands to reason that we must proceed cautiously on this path of building larger models. The large-scale nature of DLRM comes from blown-up embedding parameters - a seemingly inadvertent effect of naive usage of embedding tables. This paper evaluates the tradeoffs between smaller models achieved by parameter sharing methods and large models using naive embedding tables. We find that compressed DLRM models have seemingly no downside in training efficiency and quality.

This paper provides a theoretical answer to the question, "How much can embedding tables be compressed without significantly losing quality." We present a general parameter sharing setup (PSS) for embedding tables and prove that we only require memory logarithmic $(\mathcal{O}(1/\epsilon^2 \max(d^2, d.\log(n))))$ in $n$ to approximate an embedding table $E \in R^{n \times d}$ up to a factor of $\epsilon$ for subsequent computations. This result motivates us to evaluate the extents of compression that can be achieved for embedding tables. We find that we can obtain 10000$\times$ compression (10MB embedding tables) on the criteo-tb dataset to achieve the same target quality - an order of magnitude higher compression than previously reported. These small models improve inference times and training time per iteration, lower training costs, eliminate engineering challenges and make the models easy to deploy on low-resource devices. While large compression is possible in theory and as we will show in our results, one important aspect of training compressed models needs to be solved - slow convergence.

\begin{figure}
        \centering
        \caption{This not-to-scale illustration shows tradeoffs of having larger parameters ( and hence the memory footprint of the model ). Each line has its own scale. As we move towards the right, memory increases, and we need to have memory farther away from computational hardware. This increases the latency and thus inference time and training time per iteration. The energy cost also increases due to larger active hardware. We observe that the convergence becomes faster as the number of parameters increases. The solid lines show the overall training time and costs, showing that more parameters are not always better.}   
    \includegraphics[scale=0.16]{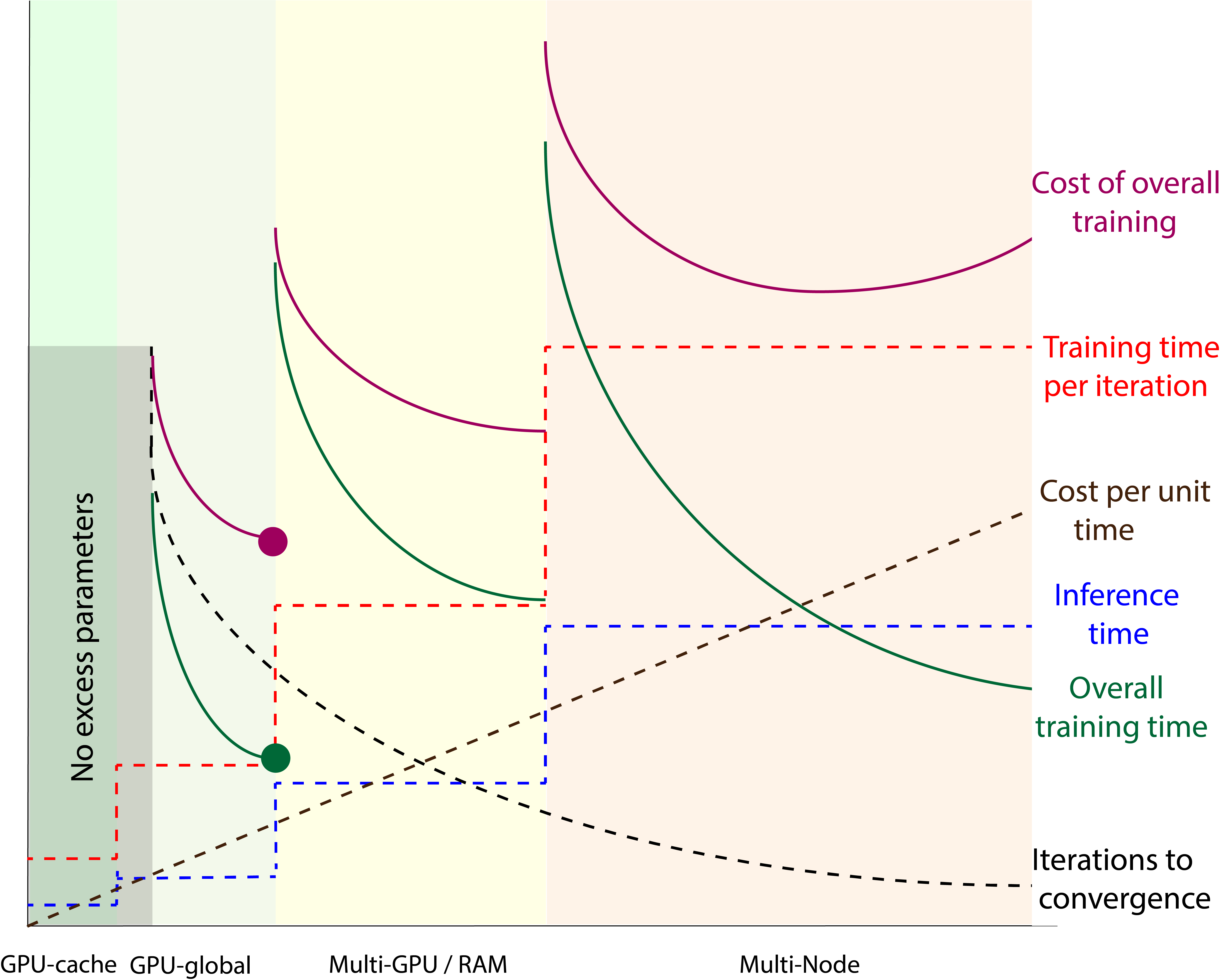}
    \vspace{-0.5cm}
     \label{fig:tradeoff}
\end{figure}

A general observation in machine learning is that the number of iterations required to converge to a target quality decreases with the increase in parameters. We also observe this effect in the training of parameter-shared recommendation models. The higher the compression, the slower the convergence:  a concern since training times are one of the significant aspects of the model. However, a highly compressed model completely changes the storage of model parameters (e.g., multiple nodes to a single node, RAM to GPU, etc.). Hence, we observe a steep decrease in the latency of embedding lookup/gradient update operations. An illustration of this tradeoff is presented in figure \ref{fig:tradeoff}. The question of how this improvement in latency compensates for increased training iterations is worth investigating. Unfortunately, existing parameter-sharing methods do not demonstrate much training time per iteration improvement due to various reasons such as high algorithmic complexity, poor cache efficiency, and sub-optimal implementation strategies. This paper talks about various aspects of implementation. With our PSS, we demonstrate that, indeed, the advantage of faster training time per iteration almost compensates for the slower convergence in compressed models. For example a 1,000$\times$ compressed model trains in $1.1 \times$ time whereas a 10,000$\times$ compressed model trains at $1.04 \times$ time as compared to original model.

This paper provides a theoretical peek into the compression of large recommendation models. Also, it resolves one of the major roadblocks in training compressed recommendation models by leveraging the system advantage of compressed models. We show 10000$\times$ compression ($10\times$ higher than best known) on DLRM on criteo-tb, making the total model size 20MB. Such a small DLRM can now be easily deployed on client devices that are resource constrained. 

\ssection{Theory of parameter-shared embedding tables}
\label{sec:theory}
\vspace{0.25cm}
\ssubsection{General parameter sharing embedding table setup}
\begin{definition}[\textbf{Parameter shared setup}]
Under parameter shared setup (PSS) for embedding table of size $n \times d$, we have a set of weights $M$ and recovery function $\mathcal{M} : R^{|M|} \times \{0,..,n-1\} \rightarrow R^d$. The embedding of a token $i \in \{0,..,n-1\}$ is recovered as $\mathcal{M}(M, i)$
\end{definition}
M can be represented with any memory layout(eg. 2D/1D array). We denote the size of $M$ as $|M|$ and memory to store $\mathcal{M}$ as $|\mathcal{M}|$. Note the inherent trade-off between $|M|$ and $|\mathcal{M}|$. We can make $M$ large in order to get a very simple $\mathcal{M}$ (for example, $M{=}E$ and $\mathcal{M}(M, i){=}M[i]$) or we can keep $M$ very small at the cost of complicated $\mathcal{M}$ (for example $M$ = [0,1] and $\mathcal{M}$ combines these bits to get bit representation of $E$). An effective PSS for embedding table $E \in R^{n\times d}$ would have $(|M| + |\mathcal{M}|) \ll nd$. In context of training a PSS in an end-to-end manner, we additionally want (1) $\mathcal{M}$ is differentiable (2) $\mathcal{M}(M, i) $ computation is cheap. 

One of the key questions we want to answer is how small a PSS can we create for a ground truth embedding table $E$? i.e. how small can $(|M| + |\mathcal{M}|)$ be. For an arbitrary $E$, it is impossible to encode $E$ in sublinear memory accurately. So, in the next section, we define a $(1 \pm \epsilon)$ PSS 


\ssubsection{$(1 \pm \epsilon)$ parameter sharing setup for table $E$}
While creating an alternative representation of embedding table $E$, let us see what we would like to achieve. First, we would like to preserve all the pairwise inner products between two embeddings of tokens from the embedding table. Additionally, we also want to minimally affect the subsequent computations performed on the embeddings retrieved from the embedding table. In recommendation models, the embeddings retrieved are passed through neural network layers, which will perform inner products on embeddings. With this in mind, we propose the following definition of approximate PSS for embedding the table. 

\begin{definition}[$\mathbf{(1 \pm \epsilon)}$ \textbf{parameter sharing setup}]
A PSS $(M, \mathcal{M})$ is an $(1 \pm \epsilon)$ PSS for an embedding table E if the following holds, 
\begin{align*}
    & \forall x \in R^d \; \forall i \in \{ 0, .., n{-}1\} \\
    & \quad |\langle \mathcal{M}(M, i), x \rangle - \langle E[i], x \rangle | \leq \epsilon ||x||_2
\end{align*}
\end{definition}
We note that the $(1 \pm \epsilon)$ PSS for embedding table E also preserves the pairwise distances between the embeddings of different tokens, as is mentioned in the following result.
\begin{theorem} \label{thm:1}
Let  $(M, \mathcal{M})$ is a $(1 \pm \epsilon)$ PSS for embedding table E, we have ,
\begin{align*}
    & \forall i, j \in \{0,...,n{-}1\}, \\
    &| \;\; || \mathcal{M}(M, i) - \mathcal{M}(M, j) ||_2 - || (E[i] - E[j]) ||_2 \; \; | \leq 2\epsilon
\end{align*}
\end{theorem}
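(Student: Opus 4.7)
The plan is to reduce Theorem~\ref{thm:1} to a simple bound on the perturbation vector $u_i := \mathcal{M}(M,i) - E[i]$ and then apply the reverse triangle inequality. The key observation is that the $(1\pm\epsilon)$ PSS hypothesis says $|\langle u_i, x\rangle| \leq \epsilon\|x\|_2$ for every $x \in \mathbb{R}^d$ and every $i$; in other words, the linear functional $x \mapsto \langle u_i, x\rangle$ has operator norm at most $\epsilon$, which immediately bounds $\|u_i\|_2$.

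First I would make the substitution $x = u_i$ in the PSS inequality. This yields $\|u_i\|_2^2 = \langle u_i, u_i\rangle \leq \epsilon \|u_i\|_2$, and hence $\|u_i\|_2 \leq \epsilon$ for every token $i$ (handling the trivial case $u_i = 0$ separately, though the inequality holds vacuously there). This per-token bound is really the content of the hypothesis: preserving all inner products with arbitrary test vectors $x$ forces the recovered embedding to lie within an $\epsilon$-ball of the true embedding.

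Next I would write
\begin{equation*}
\mathcal{M}(M,i) - \mathcal{M}(M,j) = (E[i] - E[j]) + (u_i - u_j),
\end{equation*}
and invoke the reverse triangle inequality to get
\begin{equation*}
\bigl|\,\|\mathcal{M}(M,i) - \mathcal{M}(M,j)\|_2 - \|E[i] - E[j]\|_2\,\bigr| \leq \|u_i - u_j\|_2.
\end{equation*}
Finally, the ordinary triangle inequality gives $\|u_i - u_j\|_2 \leq \|u_i\|_2 + \|u_j\|_2 \leq 2\epsilon$, which completes the argument.

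There is no real obstacle: the only non-routine step is recognizing that the hypothesis, being quantified over all $x$, can be specialized to $x = u_i$ to collapse an inner-product bound into a norm bound. Everything else is two applications of the triangle inequality. The proof is tight in the sense that if each $u_i$ has norm exactly $\epsilon$ and $u_i = -u_j$, the bound is attained up to the constant.
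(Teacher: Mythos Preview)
Your proof is correct and follows the same overall skeleton as the paper's: establish a per-token perturbation bound, then decompose $\mathcal{M}(M,i)-\mathcal{M}(M,j) = (E[i]-E[j]) + (u_i - u_j)$ and apply the triangle and reverse triangle inequalities. The difference lies in how the per-token bound is obtained. You substitute $x = u_i = \mathcal{M}(M,i)-E[i]$ directly into the PSS hypothesis, which immediately gives $\|u_i\|_2^2 \leq \epsilon\|u_i\|_2$ and hence $\|u_i\|_2 \leq \epsilon$. The paper instead substitutes $x = \mathcal{M}(M,i)$ (and implicitly $x = E[i]$), obtains two-sided inequalities on $\langle \mathcal{M}(M,i),\mathcal{M}(M,i)\rangle$, and then completes the square to conclude $\bigl|\,\|\mathcal{M}(M,i)\|_2 - \|E[i]\|_2\,\bigr| \leq \epsilon$. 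This norm-preservation statement is interesting in its own right but is actually \emph{weaker} than $\|u_i\|_2 \leq \epsilon$; the paper's final pairwise step then silently uses $\|\mathcal{M}(M,i)-E[i]\|_2 \leq \epsilon$ anyway, so your direct route both shortens the argument and closes a small gap that the paper leaves open.
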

\begin{proof}
Presented in the appendix. We first analyze the norm of recovered embedding from PSS and then extend the result to pairwise distances.
\end{proof}
Naive application of JL-Lemma will give us an existence proof of memory M of size $n\log(n)$. In the next section, we show the existence of M with size $dlog(n)$, which is much smaller. Also, note that the condition for $(1 \pm \epsilon)$ PSS is quite different from that of l2-subspace embedding \cite{woodruff}. As we shall see in the proof, this leads to a logarithmic dependence on n (whereas l2-subspace embeddings for $A \in R^{n \times d}$ has no dependence on n) Also, as we shall see, the considerations in $(1 \pm \epsilon)$ PSS are different to those previously discussed in the literature for linear algebra problems.
\ssubsection{$(1 \pm \epsilon)$ PSS via Johnson–Lindenstrauss Transforms}
In this section, we answer the question of how small the $|M|$ and $|\mathcal{M}|$ can be. In order to do this, we give a construction of a $(1 \pm \epsilon)$ PSS using the Johnson–Lindenstrauss Transforms(JLT) matrices. We will use the following definition of JLT matrices from \cite{woodruff}
\begin{definition}
A randomly generated matrix $S \in R^{k \times n}$ is a $JLT(\epsilon, \delta, f)$ if with probability at least $(1 - \delta)$, for any $f$-element subset $V \subset R^n$ the following holds,
\begin{equation*}
    \forall v_1, v_2 \in V \; |\langle S v_1, S v_2 \rangle | \leq \epsilon ||v_1||_2 \, ||v_2||_2
\end{equation*}
\end{definition}
The construction of $(1 \pm \epsilon) $ PSS using JLT is presented in the following theorem,
\begin{figure}
    \caption{An i
    illustration of JLT based setup of PSS, the computation involving E is performed in the sketched-space using much lesser memory of learnable parameters M. The sketching is achieved using sketching matrix S}
    \centering
    \includegraphics[scale=0.2]{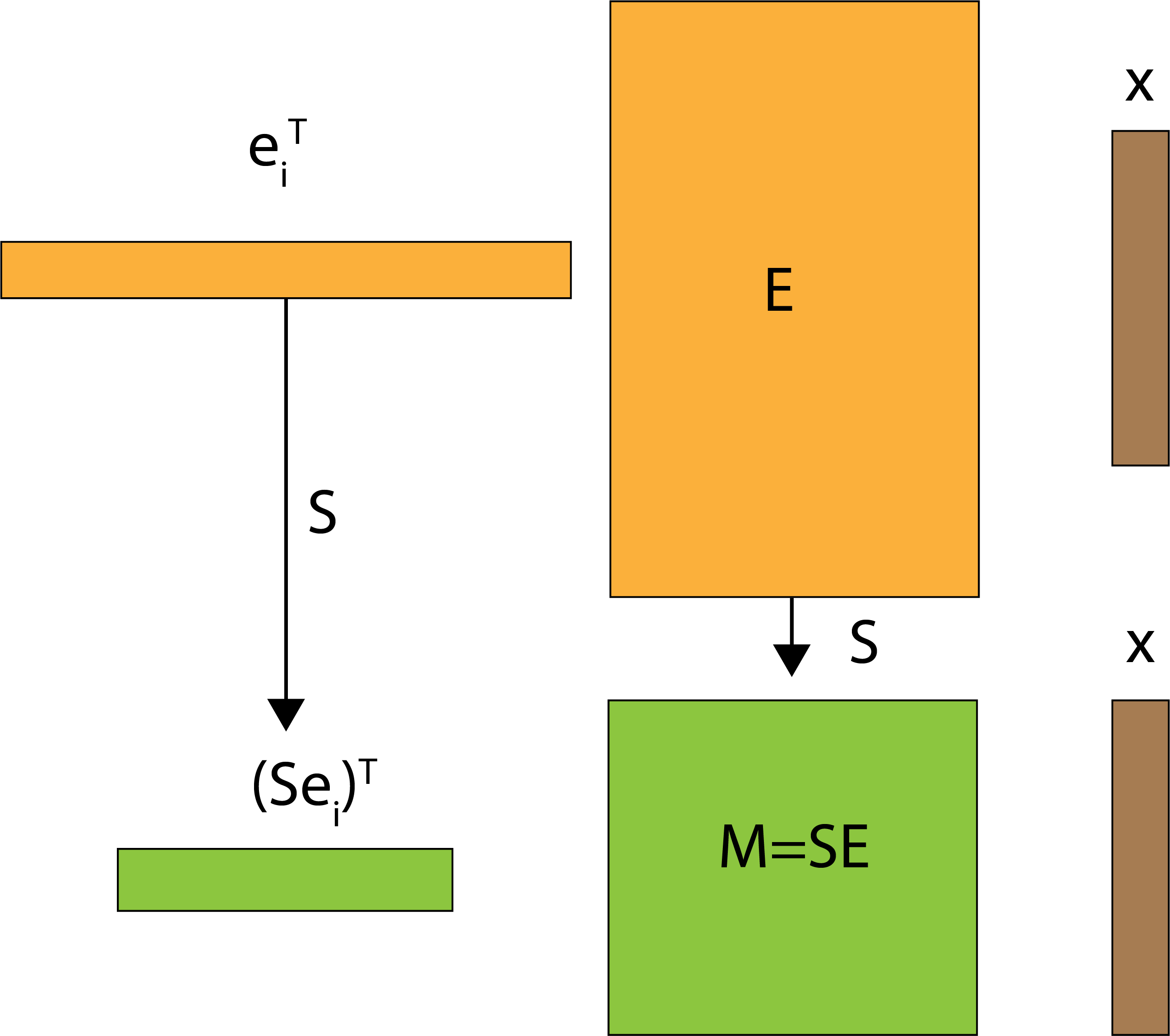}
    \label{fig:pss}
    \vspace{-0.6cm}
\end{figure}

\begin{theorem} \label{thm:2}
Let the embedding table be $E \in R^{n \times d}$. Consider a matrix $S$ which is $JLT( \epsilon / \sigma(E), \delta, 9^d + n))$ where $\sigma(E)$ denotes the maximum singular value of E. Then $M$ and $\mathcal{M}$ defined by 
\begin{align*}
    & M{=}S E \quad \mathcal{M}(M, i){=}(S e_i)^\top (M)
\end{align*}
is a $(1 \pm \epsilon) PSS$ with probability $(1-\delta)$
where $e_i \in R^n$ is a one-hot encoding of integer $i$ (i.e $e_i[i] = 1$ and rest all elements of $e_i$ are 0)
\end{theorem}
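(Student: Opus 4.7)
The approach is a standard covering (net) argument combined with the Johnson--Lindenstrauss inner-product preservation property. First I unfold the condition: with $M = SE$ and $\mathcal{M}(M,i) = (Se_i)^\top M = (Se_i)^\top S E$, the residual in the PSS definition equals
\[
\langle \mathcal{M}(M,i), x\rangle - \langle E[i], x\rangle \;=\; e_i^\top S^\top S E x - e_i^\top E x \;=\; e_i^\top (S^\top S - I)\, E x.
\]
So the claim reduces to showing $|e_i^\top (S^\top S - I) E x| \leq \epsilon \|x\|_2$ for every $i \in \{0,\dots,n-1\}$ and every $x \in \mathbb{R}^d$. By linearity in $x$ it suffices to handle $\|x\|_2 = 1$.

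The plan is to introduce a $\gamma$-net $\mathcal{N} \subset \mathbb{S}^{d-1}$ of the unit sphere in $\mathbb{R}^d$ with the standard volumetric bound $|\mathcal{N}|\leq (1+2/\gamma)^d$; choosing $\gamma=1/4$ gives $|\mathcal{N}|\leq 9^d$. I then take $V = \{e_0,\dots,e_{n-1}\}\cup \{Ex' : x' \in \mathcal{N}\}$, which has size at most $n + 9^d$---exactly matching the JLT sizing in the theorem. By the JLT property (read, as standard, as $|\langle Sv_1,Sv_2\rangle-\langle v_1,v_2\rangle|\leq (\epsilon/\sigma(E))\|v_1\|_2\|v_2\|_2$ for all $v_1,v_2\in V$), with probability at least $1-\delta$, applying this to $v_1 = e_i$ and $v_2 = Ex'$ yields
\[
|e_i^\top(S^\top S - I) E x'| \;\leq\; (\epsilon/\sigma(E))\,\|e_i\|_2\,\|Ex'\|_2 \;\leq\; (\epsilon/\sigma(E))\cdot 1 \cdot \sigma(E) \;=\; \epsilon,
\]
where I used $\|Ex'\|_2\leq \sigma(E)\|x'\|_2 = \sigma(E)$. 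Note that placing the $e_i$ into $V$ handles all $n$ indices in one shot without a separate union bound.

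To lift the bound from the net to the whole sphere, let $L = \sup_{i,\,\|x\|_2=1}|e_i^\top(S^\top S-I)Ex|$. For arbitrary unit $x$ pick $x'\in\mathcal{N}$ with $\|x-x'\|_2\leq\gamma$; by linearity of the bilinear form in its second argument,
\[
|e_i^\top(S^\top S - I) E x| \;\leq\; |e_i^\top(S^\top S - I)Ex'| + |e_i^\top(S^\top S - I)E(x-x')| \;\leq\; \epsilon + \gamma L,
\]
so $L \leq \epsilon/(1-\gamma)$. With $\gamma=1/4$ this yields $L \leq (4/3)\epsilon$, and the stray constant is absorbed either by rescaling the JLT accuracy parameter from $\epsilon/\sigma(E)$ to $(3/4)\epsilon/\sigma(E)$ or by stating the theorem up to an absolute constant.

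The only genuinely delicate step is the covering argument: it is what allows the net to sit in $\mathbb{R}^d$ rather than $\mathbb{R}^n$, and thereby shrinks the sketching dimension dependence from $n$ down to $d\log n$ (via $\log|V| \leq \log(n+9^d) \lesssim \max(d,\log n)$, which shows up as the $d\log n$ memory in the abstract bound). The role of $\sigma(E)$ in the JLT tolerance is also crucial and not just cosmetic: the JLT gives relative error scaled by $\|v_1\|_2\|v_2\|_2$, and without the $\sigma(E)$ correction the relative error on a vector of norm up to $\sigma(E)$ would not convert to the absolute error $\epsilon$ demanded by the $(1\pm\epsilon)$ PSS definition. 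Everything else---the reduction to a bilinear form, the reduction to unit $x$, and the union over $i$ absorbed by $V$---is routine.
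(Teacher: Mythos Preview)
Your proof is correct and follows essentially the same approach as the paper: reduce to the bilinear form $\langle Se_i, SEx\rangle - \langle e_i, Ex\rangle$, apply the JLT guarantee to the union of the $n$ standard basis vectors and a net of size $9^d$ coming from the $d$-dimensional column space of $E$, and then extend from the net to all unit $x$. The only cosmetic difference is the net-to-sphere extension---the paper uses the geometric-series decomposition $v = \sum_j v^{(j)}$ with $\|v^{(j)}\|\le 2^{-j}$ (as in Woodruff), whereas you use the self-bounding supremum trick $L \le \epsilon + \gamma L$; both leave the same harmless absolute constant to be absorbed.
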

\begin{proof}
The complete proof is presented in the appendix. The intuition is that we want to maintain the computations of the form $e_i^\top E x$, which can be seen as an inner product between $e_i \in R^n$ and $Ex \in R^n$. So the inner products between vectors of sets $A = \{e_0, ..., e_{n-1}\}$ and column space of $E$ need to be preserved. Once we observe this, we apply JLT to the set of n discrete points and the $1/2-net$ of column space of $E$ which contains less than $9^d$ points, and get our result.
\end{proof}

Note that when using a JLT matrix S as described in theorem \ref{thm:2}, the storage cost of memory M is  $|M| = |SE| = kd$. For the mapping $\mathcal{M}$, we only need to store the matrix $S$. Thus $|\mathcal{M}| = |S|$. The cost of storing S will depend on the exact choice of JLT. Since $(Se_i)$ is a $\mathcal{O}(1)$ operation as $e_i$ is a one-hot encoded vector, the cost of $\mathcal{M}(M,i)$ is just that of vector-matrix multiplication between $(Se_i)$ and $M$. Again, the exact cost depends on the choice of S. Using the random matrix formed with i.i.d normal variables, the most basic form of JLT, we can give the following bound on the memory of learnable parameters $M$

\begin{corollary}
Let E be the embedding table under consideration. Also, let $S \in R^{k \times n}$ matrix where each element of S is i.i.d $\mathcal{N}(0, \frac{1}{k})$ with k = $\Omega(\sigma^2(E) / \epsilon^2 ( log(1/\delta) + \max(d, log(n)) )$, then with a probability $(1 - \delta)$, $(M=SE, \mathcal{M}(M, i) = (Se_i)^\top (M))$ is a $(1\pm\epsilon)$ PSS for E.
\end{corollary}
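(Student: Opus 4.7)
The plan is to combine Theorem \ref{thm:2} with the classical JL bound for Gaussian matrices. Theorem \ref{thm:2} tells us that to produce a $(1\pm\epsilon)$ PSS, it suffices for $S$ to be a $JLT(\epsilon/\sigma(E), \delta, 9^d+n)$ matrix. Thus my task reduces to verifying that a $k\times n$ matrix with i.i.d.\ $\mathcal{N}(0,1/k)$ entries, with $k$ chosen as in the statement, meets this JLT guarantee.

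First I would invoke the standard Johnson--Lindenstrauss theorem for Gaussian sketches, which says that a matrix with i.i.d.\ $\mathcal{N}(0,1/k)$ entries preserves all pairwise inner products among an $f$-element set up to additive error $\epsilon'\|u\|_2\|v\|_2$ with failure probability at most $\delta$, provided $k = \Omega\bigl(\epsilon'^{-2}(\log f + \log(1/\delta))\bigr)$. Setting $\epsilon' = \epsilon/\sigma(E)$ and $f = 9^d + n$ as dictated by Theorem \ref{thm:2}, this bound becomes
\begin{equation*}
k = \Omega\!\left(\frac{\sigma^2(E)}{\epsilon^2}\bigl(\log(9^d + n) + \log(1/\delta)\bigr)\right).
\end{equation*}

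Next I would simplify the $\log(9^d+n)$ term. Since $\log(a+b) \leq \log(2\max(a,b)) = \log 2 + \max(\log a, \log b)$, we have $\log(9^d + n) = \Theta(\max(d\log 9,\,\log n)) = \Theta(\max(d, \log n))$, absorbing the constant into the $\Omega(\cdot)$. This matches exactly the expression $\max(d,\log n) + \log(1/\delta)$ in the corollary's hypothesis on $k$, so the Gaussian $S$ satisfies the JLT property required by Theorem \ref{thm:2}. Applying Theorem \ref{thm:2} then yields that $(M=SE,\ \mathcal{M}(M,i)=(Se_i)^\top M)$ is a $(1\pm\epsilon)$ PSS for $E$ with probability at least $1-\delta$.

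I do not anticipate any serious obstacle here: the corollary is essentially a specialization of Theorem \ref{thm:2} via a standard Gaussian JLT bound, and the only non-routine step is the cosmetic simplification of $\log(9^d+n)$ into $\max(d,\log n)$. The nontrivial work has already been done inside Theorem \ref{thm:2} (in particular, identifying the $(9^d + n)$-point set that must be preserved, namely the discrete basis $\{e_i\}$ together with a $1/2$-net of the column space of $E$); here we are merely plugging in a concrete, well-studied JLT family and reading off the resulting $k$.
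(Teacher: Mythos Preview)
Your proposal is correct and mirrors the paper's approach exactly: the corollary is stated without a separate proof in the paper and is meant to follow immediately from Theorem~\ref{thm:2} by plugging in the standard Gaussian JLT bound $k=\Omega(\epsilon'^{-2}(\log f+\log(1/\delta)))$ with $\epsilon'=\epsilon/\sigma(E)$ and $f=9^d+n$, then simplifying $\log(9^d+n)=\Theta(\max(d,\log n))$. There is nothing further to add.
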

Thus we obtain an upper bound on M. $|M|=\mathcal{O}(1/\epsilon^2 \max(d^2, d.log(n)) )$ when using the standard normal matrix as JLT for embedding tables with bounded singular values. In this case, the mapping storage cost is $|\mathcal{M}| = kn$ as the matrix S is dense. The cost of applying $\mathcal{M}$ will be $kd$ and hence, $\mathcal{O}(1/\epsilon^2 \max(d^2, d.log(n)))$. In the next section, we will see how to reduce the cost of storing $S$ in practice. This logarithmic dependence on n in the case of embedding tables with very large $n$ explains why we can achieve a very high compression of 10000$\times$ for the criteo-tb dataset.

\ssubsection{$(1 \pm\epsilon)$ PSS in practice}
\textbf{$(1 \pm \epsilon)$ PSS from a learned embedding table E}
The above discussion directly gives us an algorithm to compute a $(1 \pm \epsilon) $ of a trained embedding table $E$. We just need to compute $M=SE$ and while retrieving an embedding compute $\mathcal{M}(M,i) = (Se_i)^\top M$.

\textbf{Training end-to-end $(1 \pm \epsilon)$ PSS for embedding table E}
We can also directly train an $(1 \pm \epsilon)$ PSS in an end-to-end manner. The idea is to directly train the compressed $M$. Thus, we have a matrix of learnable weights $M \in R^{k \times d}$. Let us now look at how the forward and backward pass of embedding retrieval mapping $\mathcal{M}$ looks like
\begin{equation*}
    \mathrm{ forward(i) } = (Se_i)^\top M
\end{equation*}
The forward function takes an integer $i$ and returns a $R^{1\times d}$ array which is embedding of $i$.
\begin{equation*}
    \mathrm{ backward(i, \Delta) } =   (Se_i) \Delta
\end{equation*}
The backward pass takes in all the arguments of the forward pass along with the $\Delta \in R^{1\times d}$ which is gradients of loss with respect to the output of the forward pass. We can back-propagate further to $W$ using the above formulation. The result of the backward pass is a $k \times d$ matrix. 

If we use a sparse $S$ such as with sparse JLT, we can achieve sparse gradients of W. That is, only a few entries of the gradient for W are non-zero. As we will see in section \ref{sec:robez_exp}, whether to propagate sparse or dense gradients is a vital implementation choice. We were able to achieve better training per iteration by exploiting this choice.

\textbf{Cost considerations for PSS using JLT.} In using PSS in end-to-end training, we would be learning the memory $M$ while keeping the mapping $\mathcal{M}$ constant. Thus, unlike in sketching for linear algebra \cite{woodruff}, we care about the costs of (1) storage of learnable parameters $|M|$, (2) Cost of storing mapping $|\mathcal{M}|$, and (3) cost of computing $\mathcal{M}(M, i)$. It is clear that using standard normal JLT is not feasible due to storage of matrix S will be more expensive than storing $E$ itself! There are a lot of sparse JLT$(\epsilon, \delta, f)$ matrices S which will reduce the storage cost of $S$ \cite{achlioptas2003database,dasgupta2010sparse,kane2014sparser}.  \cite{kane2014sparser} showed that the matrix needs to have a minimum column sparsity of $\Omega(\epsilon^{-1} log( f/ \delta) log( 1/ \epsilon))$ and hence the cost of storing S while using JLT matrices is lower bounded by $\Omega(n \epsilon^{-1} log( (n + 9^d)/ \delta) log( 1/ \epsilon))$ which can still be considerably high. In the next section, we see relaxations of conditions on S for practical purposes.
\ssubsection{Practical PSS and existing SOTA methods}
This section will briefly review practical relaxations of PSS defined above. 

\textbf{Sparse JL with independence relaxation:} The issue with storage costs of S can be alleviated in practice by relaxing the complete independence condition on entries of S. For example, the JLT matrix (which requires the same bound on k as given in theorem \ref{thm:2}) suggested by \cite{achlioptas2003database}, which selects each entry to be 0 with probability 2/3, $\pm 1$ with probability 1/6 independently can be generated on-the-fly using universal hashing. \cite{mitzenmacher2008simple} analyze why simpler hash functions can work well with data having enough entropy. The benefit of using universal hashing is that mapping $\mathcal{M}$ can now be stored in $O(1)$ memory. Thus using the \cite{achlioptas2003database} sparse matrix, we can obtain $(1 \pm \epsilon)$ PSS using total setup memory of  $|M| + |\mathcal{M}| =\mathcal{O}(1/\epsilon^2 \max(d^2, d.log(n)) )$ and cost of applying mapping $\mathcal{M}$ to be $\mathcal{O}(d)$ (hash computation and lookups) under valid independence relaxation.

Existing parameter-sharing based embedding compression methods can also be seen as PSS with varied distributions over S. We state a few of them below.
\begin{itemize}[nosep, leftmargin=*]
    \item \textbf{Hashing Trick} In this method, entire embeddings for a token $i$ is drawn from a randomly hashed location. In terms of PSS, we can define the mapping function $\mathcal{M}_h$ over a 2D matrix $\mathcal{M}_h \in R^{k \times d}$ where $k < n$.
    \begin{equation*}
    \mathcal{M}_h(M_h, i)[:] =  M_h[h(i), :]
    \end{equation*}
    where h is a hash function.
    \item \textbf{QR decomposition}\cite{QuoRemTrick19}. In this method, embedding for a token $i$ is drawn in chunks from separate memory vectors. In terms of PSS, we can define the mapping function $\mathcal{M}_q$ over, say l, pieces of 2D memory $M_1, M_2, .. M_l \in R^{k \times {d/l}}$ as follows
    \begin{equation}
        \mathcal{M}_q(\{M_1,M_2,..M_k\}, i)[j*(d/l):(j+1)*(d/l)] = M_j[h_j(i),:]
    \end{equation}
    In words, the $j^{th}$ chunk of $i^{th}$ embedding is recovered using the chunk at the location $h_j(i)$ in memory $M_r$

    \item \textbf{ HashedNet~\cite{hashtrick} and ROBE-Z~\cite{robez} }: In HashedNets, authors proposed mapping model weights randomly into a parameter array. ROBE-Z extended this idea by hashing chunks of embedding vector instead of individual elements. In terms of PSS, we can define the mapping function $\mathcal{M}_r$ over 1D memory $M_r$ as
    \begin{equation*}
    \mathcal{M}_r(M_r, i)[j*Z:(j+1)*Z] =     M_r[h(i,j):h(i,j)+Z]
    \end{equation*}
    In words, the $j^{th}$ chunk of $i^{th}$ embedding is recovered using the chunk at the location $h(i,j)$ from $M_r$. Here, $h:\mathbf{N}^2 \rightarrow \{0,..,|M_r|\}$ is a hash function. If we set $Z=1$, then we get the mapping function for HashedNet.
\end{itemize}
\begin{table}[]
\centering
\caption{State of the art in embedding compression on criteo datasets. (the ones in grey are not PSS ) }

\begin{tabular}{|l|l|l|l|}
\hline
Method                              & Dataset                                        & Compression                                       & Quality                               \\ \hline
HashingTrick                        & Crieto Kaggle                                  & 4$\times$                                         & worse                                 \\ \hline
QR Trick                            & Criteo Kaggle                                  & 4$\times$                                         & similar/slightly worse                \\ \hline
{\color[HTML]{C0C0C0} MD Embedding} & {\color[HTML]{C0C0C0} Criteo Kaggle}           & {\color[HTML]{C0C0C0} 16$\times$}                 & {\color[HTML]{C0C0C0} better/similar} \\ \hline
{\color[HTML]{C0C0C0} TT-Rec}       & {\color[HTML]{C0C0C0} Criteo kaggle/Criteo TB} & {\color[HTML]{C0C0C0} 112$\times$ / 117 $\times$} & {\color[HTML]{C0C0C0} better/simiar}  \\ \hline
ROBE                                & Criteo Kaggle/Criteo TB                        & 1000$\times$                                      & better/similar                        \\ \hline
\end{tabular}
\label{tab:sota}
\end{table}

We can summarize the state-of-the-art embedding compression in the table \ref{tab:sota}. Our PSS theory suggests that we should be able approximate the embedding table in memory of the order of $\mathcal{O}$(log(n). Thus, it is important to question if $1000\times$ compression is the best compression we can achieve on a dataset like CriteoTB where we have large ($\sim$400GB sized) embedding tables. As we shall see, one of the significant roadblocks in aiming to achieve higher compression is training time. The higher the compression, the more iterations are needed to train the model to a certain quality. In the next section, we discuss how to overcome this bottle-neck preventing training of highly compressed PSS.

\ssection{Road to 10000$\times$ compression with PSS}\label{sec:robez_exp}
Table~\ref{tab:kaggle_quality} shows the quality of embeddings for the criteo-kaggle dataset over varying compression rates across different popular recommendation models. We can see that the state-of-the-art compression of the criteo-kaggle dataset is $1000\times$. It is a consequence of PSS theory that larger embedding tables (prevalent in industrial scale recommendation models) such as those for criteo-tb should obtain more compression. Our goal is to study higher compression for large embedding tables and improve over existing SOTA ($1000$x). 

A significant roadblock in building highly compressed embedding tables is the slow convergence. Table \ref{tab:kaggconverge} shows that across different deep learning recommendation models, the number of iterations required to converge increases with higher compression. Specifically, for $1000\times$ compression, it takes up to $4\times$ iterations, and for models with $10000\times$ compression, most models do not converge in 15 epochs. If a model does not converge, it is hard to judge if the convergence is too slow or its capacity is consumed. 
In order to be able to train for larger iterations, we should be able to train the highly compressed models faster. Indeed, we expect some system benefits with highly compressed models, as shown in figure \ref{fig:tradeoff}. We find that existing methods cannot train highly compressed embedding tables because of the implementation choices. We propose the following implementation choices.

\textbf{Hashing chunks is better than hashing elements:} As suggested in \cite{robez}, we find that hashing chunks instead of individual elements is indeed helpful in reducing the latency of PSS systems. As shown in table \ref{tab:tt1}(d,e), both forward and backward passes are affected a lot by chunk sizes. It is also interesting to note that the effect of chunk sizes diminishes after size 16. Hence, we can choose a chunk size of 32 for hashing.
\begin{figure*}[]
\caption{These experiments are run on a single GPU for a simple embedding lookup and loss is taken to be sum of all retrieved elements with a batch size of 10240 and embedding dimension=128.  {\color{red} $\blacksquare$} higher latency (worse) {\color{green} $\blacksquare$} lower latency (better)}
\label{fig:smallt}

\centering
\resizebox{0.31\linewidth}{!}{
\subfloat[Fixed chunk size (16)]{

}
}

\end{figure*}

\textbf{Dense gradients are suitable for highly compressed PSS} Generally, in embedding tables, implementations use sparse gradients. Dense gradients are particularly wasteful in embedding tables, where in each iteration, only a few weights are involved in the computation. However, the same does not apply to highly compressed PSS. In the case of PSS, the algorithm for sparse gradient computation is more involved than that for dense gradients. The algorithms for sparse and dense gradient computation are specified in algorithm \ref{algo:gradient}. Also, the scatter operation is much faster when the memory size of embeddings is smaller. Hence, dense gradients work well with higher compressions. From table \ref{tab:tt1}(b,c), we can see that at higher compression rates, the backward pass is up to $8\times$ faster with dense gradients as compared to sparse gradients.
\begin{algorithm}

\caption{Backward Pass for PSS}\label{algo:gradient}
\begin{algorithmic}

\Require $\mathcal{G}_O\in R^{B \times d}$, $\mathcal{I} \in R^{B \times d}$, $\mathrm{sparse} \in \{\mathrm{True, False}\}$, M: compressed memory.

\Comment{$\mathcal{G}_O$ is the gradient w.r.t output, and $\mathcal{I}$ is the mapping matrix showing the locations from which the output was accessed}

\Ensure $\mathcal{G}_i$ is gradient w.r.t input. 
\If{$\mathrm{sparse}$}
    \State $\mathrm{unique, invIdx} \leftarrow \mathrm{findUnique}(\mathcal{I}.\mathrm{flatten}(), \mathrm{returnInverse}=\mathrm{True})$
    \State $g \leftarrow \mathrm{Tensor((unique.shape,))}$
    \State $g.\mathrm{scatter\_add}(\mathrm{invIdx},\mathcal{G}_o.\mathrm{flatten}()$
    \State $\mathcal{G}_i = \mathrm{sparseTensor(location=unique, values}=\mathrm{g, size=M.size})$
\ElsIf{$\mathrm{dense}$}
    \State $\mathcal{G}_i \leftarrow \mathrm{Tensor(M.size)}$
    \State $\mathcal{G}_i.\mathrm{scatter\_add}(0, \mathcal{I}.\mathrm{flatten}(), \mathcal{G}_o.\mathrm{flatten}())$
\EndIf
\end{algorithmic}
\end{algorithm}

\textbf{Forward/Backward kernel optimizations} Latency for PSS, and CUDA kernels in general,  is very sensitive to the usage of shared memory, occupancy, and communication between CPU and GPUs. We also optimize our PSS code to minimize the data movement costs, implement custom kernels to fuse operations to improve shared memory usage and optimize CUDA grid sizes to obtain the best performance.

With the implementation choices mentioned above, we are ready to train a $10000\times$ compressed PSS for criteo-tb. We implement our PSS using ROBE-style hashing.

\ssection{Experimental results on Criteo datasets}  \label{sec:results}
We perform experiments on criteo-kaggle and criteo-tb datasets in order to confirm the following hypothesis,
\begin{enumerate}[nosep, leftmargin=*]
\item Theory in section \ref{sec:theory} dictates that embedding table ($E \in R^{n \times d}$) can be represented in memory logarithmic in $n$. High compression should be possible in these datasets. Specifically, higher compression should be possible in larger embedding tables.
\item The system advantage of smaller PSS should compensate for the convergence advantage of the original model.
\end{enumerate}
\textbf{Datasets:} Criteo-kaggle and criteo-tb datasets have 13 integer and 26 categorical features. criteo-kaggle data was collected over seven days, whereas the criteo-tb dataset was collected over 23 days. criteo-tb is one of the largest recommendation datasets in the public domains with around 800 million token values making the embedding tables of size around 400GB. (with d = 128). criteo-kaggle is smaller and has a 2GB sized embedding table. Note that industrial-scale models are much larger than the DLRM model we talk about here. For example, Facebook recently published a model sized 50TB \cite{mudigere2021high}. One can extrapolate the benefits of PSS to industrial-scale models based on this case study. 

\textbf{Models:} Facebook MLPerf DLRM\cite{DLRM19} model, available under Apache-2.0 license, for the criteo-tb dataset achieves the target AUC (0.8025) with the embedding memory of around 100GB. This model uses a maximum cap of 40M indices per embedding table, leading to a total of 204M embeddings. This model cannot be trained on a single GPU (like V100) and is trained using a multiple GPUs (4 or 8). For the criteo-tb dataset, we use the DLRM MLPerf model. For criteo-kaggle dataset, we use an array of state-of-the-art models DLRM\cite{DLRM19}, DCN\cite{DCN17}, AUTOINT\cite{song2019autoint}, DEEPFM \cite{guo2017deepfm}, XDEEPFM \cite{lian2018xdeepfm} and FIBINET. We use our PSS implementation described in \ref{sec:robez_exp} as a compressed model.
\begin{figure*}[]
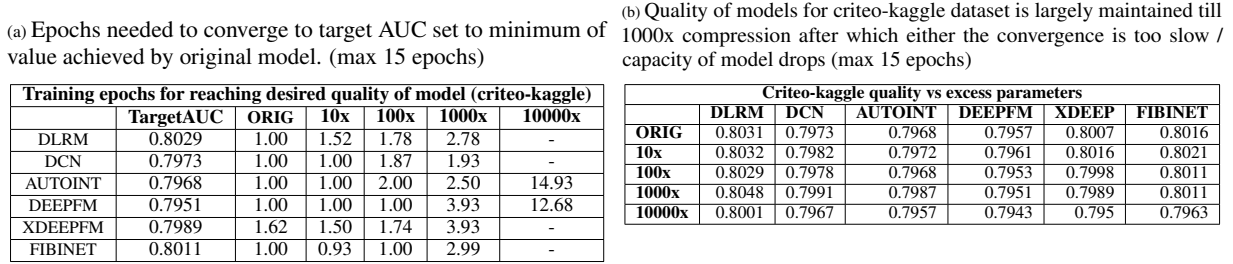

\caption{Quality and convergence on criteo-kaggle of 5 popular models vs. compression. The standard deviation of all AUC results is within 0.0009.}
\label{}
\centering
\resizebox{0.49\linewidth}{!}{
\subfloat[\large{Epochs needed to converge to target AUC set to minimum of value achieved by original model. (max 15 epochs)}]{
\begin{tabular}{|ccccccc|}
\hline
\multicolumn{7}{|c|}{\textbf{{Training epochs for reaching desired quality of model (criteo-kaggle)}}}                                                                                                                                                                                                                                                                                                                                                      \\ \hline
\multicolumn{1}{|c|}{}        & \multicolumn{1}{c|}{\textbf{{TargetAUC}}} & \multicolumn{1}{c|}{\textbf{\small{ORIG}}} & \multicolumn{1}{c|}{\textbf{\begin{tabular}[c]{@{}c@{}}10x\end{tabular}}} & \multicolumn{1}{c|}{\textbf{\begin{tabular}[c]{@{}c@{}}100x\end{tabular}}} & \multicolumn{1}{c|}{\textbf{\begin{tabular}[c]{@{}c@{}} 1000x\end{tabular}}} & \textbf{\begin{tabular}[c]{@{}c@{}}10000x\end{tabular}} \\ \hline
\multicolumn{1}{|c|}{\small{DLRM}}    & \multicolumn{1}{c|}{0.8029}              & \multicolumn{1}{c|}{1.00}          & \multicolumn{1}{c|}{1.52}                                                          & \multicolumn{1}{c|}{1.78}                                                           & \multicolumn{1}{c|}{2.78}                                                            & -                                                                \\ \hline
\multicolumn{1}{|c|}{\small{DCN}}     & \multicolumn{1}{c|}{0.7973}              & \multicolumn{1}{c|}{1.00}          & \multicolumn{1}{c|}{1.00}                                                          & \multicolumn{1}{c|}{1.87}                                                           & \multicolumn{1}{c|}{1.93}                                                            & -                                                                \\ \hline
\multicolumn{1}{|c|}{\small{AUTOINT}} & \multicolumn{1}{c|}{0.7968}              & \multicolumn{1}{c|}{1.00}          & \multicolumn{1}{c|}{1.00}                                                          & \multicolumn{1}{c|}{2.00}                                                           & \multicolumn{1}{c|}{2.50}                                                            & 14.93                                                            \\ \hline
\multicolumn{1}{|c|}{\small{DEEPFM}}  & \multicolumn{1}{c|}{0.7951}              & \multicolumn{1}{c|}{1.00}          & \multicolumn{1}{c|}{1.00}                                                          & \multicolumn{1}{c|}{1.00}                                                           & \multicolumn{1}{c|}{3.93}                                                            & 12.68                                                            \\ \hline
\multicolumn{1}{|c|}{\small{XDEEPFM}} & \multicolumn{1}{c|}{0.7989}              & \multicolumn{1}{c|}{1.62}          & \multicolumn{1}{c|}{1.50}                                                          & \multicolumn{1}{c|}{1.74}                                                           & \multicolumn{1}{c|}{3.93}                                                            & -                                                                \\ \hline
\multicolumn{1}{|c|}{\small{FIBINET}} & \multicolumn{1}{c|}{0.8011}              & \multicolumn{1}{c|}{1.00}          & \multicolumn{1}{c|}{0.93}                                                          & \multicolumn{1}{c|}{1.00}                                                           & \multicolumn{1}{c|}{2.99}                                                            & -                                                                \\ \hline
\end{tabular}}\label{tab:kaggconverge}
}
\resizebox{0.49\linewidth}{!}{
\subfloat[\large{Quality of models for criteo-kaggle dataset is largely maintained till 1000x compression after which either the convergence is too slow / capacity of model drops (max 15 epochs)}]{

\begin{tabular}{|lrrrrrr|}
\hline
\multicolumn{7}{|c|}{\textbf{Criteo-kaggle quality vs excess parameters}}                                                                                                                                                                                                          \\ \hline
\multicolumn{1}{|l|}{}                & \multicolumn{1}{l|}{\textbf{{DLRM}}} & \multicolumn{1}{l|}{\textbf{{DCN}}} & \multicolumn{1}{l|}{\textbf{{AUTOINT}}} & \multicolumn{1}{l|}{\textbf{{DEEPFM}}} & \multicolumn{1}{l|}{\textbf{{XDEEP}}} & \multicolumn{1}{l|}{\textbf{{FIBINET}}} \\ \hline
\multicolumn{1}{|l|}{\textbf{ORIG}}   & \multicolumn{1}{r|}{0.8031}        & \multicolumn{1}{r|}{0.7973}       & \multicolumn{1}{r|}{0.7968}           & \multicolumn{1}{r|}{0.7957}          & \multicolumn{1}{r|}{0.8007}         & 0.8016                                \\ \hline
\multicolumn{1}{|l|}{\textbf{10x}}    & \multicolumn{1}{r|}{0.8032}        & \multicolumn{1}{r|}{0.7982}       & \multicolumn{1}{r|}{0.7972}           & \multicolumn{1}{r|}{0.7961}          & \multicolumn{1}{r|}{0.8016}         & 0.8021                                \\ \hline
\multicolumn{1}{|l|}{\textbf{100x}}   & \multicolumn{1}{r|}{0.8029}        & \multicolumn{1}{r|}{0.7978}       & \multicolumn{1}{r|}{0.7968}           & \multicolumn{1}{r|}{0.7953}          & \multicolumn{1}{r|}{0.7998}         & 0.8011                                \\ \hline
\multicolumn{1}{|l|}{\textbf{1000x}}  & \multicolumn{1}{r|}{0.8048}        & \multicolumn{1}{r|}{0.7991}       & \multicolumn{1}{r|}{0.7987}           & \multicolumn{1}{r|}{0.7951}          & \multicolumn{1}{r|}{0.7989}         & 0.8011                                \\ \hline
\multicolumn{1}{|l|}{\textbf{10000x}} & \multicolumn{1}{r|}{0.8001}        & \multicolumn{1}{r|}{0.7967}       & \multicolumn{1}{r|}{0.7957}           & \multicolumn{1}{r|}{0.7943}          & \multicolumn{1}{r|}{0.795}          & 0.7963                                \\ \hline
\end{tabular}
} \label{tab:kaggle_quality}
}
\vspace{-0.5cm}
\end{figure*}
\begin{figure*}[]
\caption{Quality and convergence on criteo-tb with DLRM vs. compression. The standard deviation of AUC results is within 0.0009. }\label{tab:tbconverge}\label{tab:tbquality} \label{tab:tbtables}
\label{}
\centering
\resizebox{0.49\linewidth}{!}{
\subfloat[\large{Epochs required to reach 0.8025 AUC for DLRM model on criteo-tb dataset. Relative time computes the ratio of A*B to original model}]{

\begin{tabular}{|c|c|c|c|c|c|}
\hline
    &                & orig & $10^2\times$ & $10^3\times$ & $10^4\times$ \\ \hline
A   & Epochs         & 1.00 & 1.94         & 1.9          & 4.35         \\ \hline
B   & Time /1000 itr & 50.6 & 31.04        & 29.6         & 11.6         \\ \hline
A*B & Relative Time  & 1    & 1.19         & 1.11         & 1.04         \\ \hline
\end{tabular}

}
}
\resizebox{0.49\linewidth}{!}{
\subfloat[\large{$10^4\times$ compression also reaches the target AUC. For $10^5\times$ convergence is too slow/mode capacity is reached. (max 15 epochs) }]{


\begin{tabular}{|c|c|c|c|c|c|}
\hline
 0.8025 AUC & orig & $10^2\times$ & $10^3\times$ & $10^4\times$ & $10^5\times$ \\ \hline
DLRM            & Yes  & Yes          & Yes          & Yes          & No          \\ \hline
\end{tabular}
} 
}
\vspace{-0.5cm}
\end{figure*}

\textbf{Quality of model vs. Excess parameters:}
Tables \ref{tab:kaggle_quality}(b) and \ref{tab:tbquality}(b) show the results for the two datasets across different values of compression. In table \ref{tab:kaggle_quality}(b), we can see that across different models, the quality of the model is maintained until 1000$\times$ compression. As criteo-tb embedding tables are much larger than the criteo-kaggle dataset, according to the section \ref{sec:theory}, we should see larger values of compression. Indeed, we obtain a 10000$\times$ compression without loss of quality of the model for criteo-tb. This level of compression is unprecedented in embedding compression literature. These experiments validate our first hypothesis and provide a new state-of-the-art embedding compression. In the rest of the section, we evaluate how the system advantage of PSS compares against the original embeddings in various aspects.

\textbf{Inference time} 
In figure \ref{fig:inf}, we compare the inference time of test data (89M samples) in the criteo-tb dataset with a batch size of 16384 with PSS on a single Quadro RTX-8000 GPU and the original model on 8 Quadro RTX-8000 GPUs. The total time for inference for the original model is around 203 seconds, whereas PSS is around 97 seconds. In the original model trained on 8GPUs, embedding lookup is model-parallel, whereas the rest of the computation ( bot-mlp, top-mlp, and interactions) is data-parallel. There is a steep improvement of over 3$\times$ in embedding time lookup as we move from a distributed GPU setup for embedding tables to a PSS on a single GPU. The extra computation time in PSS is much smaller than all-to-all communication costs in the original embedding table lookup. We include the time required for data distribution (initial) in bot-mlp and data-gather (final) in top-mlp timings. This communication cost is high, and we can see that we are better off performing the entire computation on a single GPU.

\textbf{Model training time per iteration vs excess parameters} Figure \ref{fig:tt} records the time-taken for 1000 iterations of training. While using embedding tables, we can choose to back-propagate sparse or dense embedding gradients. The general idea is to use sparse gradients when very few gradients are non-zeros. In the original model, we can only use sparse gradients as using dense gradients is prohibitive w.r.t to computation and communication. In PSS, however, we compare both modes for gradient back-propagation. We see that dense gradients perform exceptionally well at higher rates of compression. The performance of sparse gradients is constant across different compression rates as the workload is similar. Generally, it seems good to use dense gradients for PSS when the effective memory size (i.e., the final memory of compressed embedding tables ) is small. It is noteworthy that the training time per iteration reduces significantly with higher compression. For example, with 1000x compression, the training time is $1.7 \times$ lesser than the time taken by the original model, whereas with 10000x compression, the time per iteration is 4.37 $\times$ lesser.

\textbf{Model convergence and overall time vs. excess parameters}
We observe a consistent trend in convergence: as the number of excess parameters in the models reduces, the convergence becomes slower. This can be seen across models and datasets as shown in tables \ref{tab:kaggconverge}(a) and \ref{tab:tbconverge}(a).  As an example, table \ref{tab:tbconverge} shows that with 10000$\times$ compression, we require 4.55 epochs as compared to the original model's one epoch. This might seem unfavorable at first. However, as seen in the previous section, there is a significant gain in training time per iteration. As seen in table  \ref{tab:tbconverge}, this gain largely compensates for the disadvantage in terms of convergence, making the overall training times similar. For example, while we need 4.55 epochs with 10000$\times$ compressed PSS, the training time per iteration goes down by a factor of 4.37. Hence, the overall training time is only 1.04$\times$ original time. 

\textbf{Engineering challenges and costs vs. excess parameters:}
The excess parameters in a recommendation model primarily appear due to the construction of embedding tables. The industrial-scale embedding tables can go as large as hundreds of terabytes. With the increase in the model's size, the model needs to be distributed across different nodes and GPUs. The complexities of efficiently running a  distributed model training include many considerations such as non-uniform memory allocation and communication costs. The increasing literature detailing the engineering solutions to training such models is evidence of the fact that training such models require engineering ingenuity to solve the challenges involved \cite{mudigere2021high, kalamkar2020optimizing, jiang2019xdl, park2018deep, naumov2020deep, yang2020training}. For example, in \cite{mudigere2021high}, the authors detail their solution to train a model of size 50TB on a new distributed system. Similarly, in \cite{kalamkar2020optimizing}, authors talk about their optimizations of recommendation models on CPU clusters. In this paper, we argue that the large-scale nature of embedding-based recommendation models appears due to embedding tables where a complete $n \times d$ table with blown up $n$ does not add value to model capacity while adding significant engineering challenges. Distributed systems also imply significant energy costs. PSS can avoid these downsides by fitting entire learnable parameters on a single GPU/node.

\begin{figure}[]
\begin{subfigure}{0.45\textwidth}
    \centering
    \includegraphics[trim={0 20 0 0}, clip, width=\linewidth]{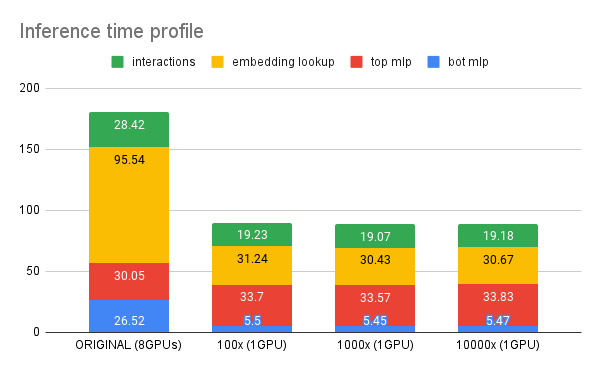}
    \caption{[criteo-tb] DLRM inference time taken for over 89M samples with a batch size of 16384. Communication causes embedding lookup for original model much slower than that for PSS models when entire model is located on a single GPU }

    \label{fig:inf}
\end{subfigure}
\begin{subfigure}{0.45 \textwidth}
\centering
    \includegraphics[trim={0 20 60 0}, clip, width=\linewidth]{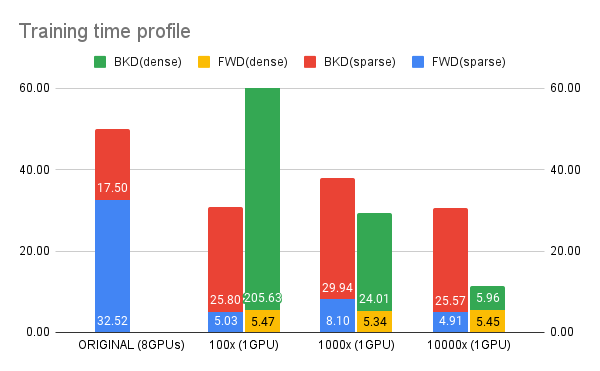}
        \caption{DLRM training time for 1000 iterations of training with a mini-batch size of 2048 and SGD optimizer}
        \label{fig:tt}

\end{subfigure}
\vspace{-0.5cm}
\end{figure}

\vspace{-0.2cm}
\ssection{Conclusion}
Today's prevalent idea in deep learning(DL) is overparameterized models with stochastic iterative learning. While this paradigm has led to success in DL, the sustainability of this route to success is a grim question \cite{thompson2021deep}. We evaluate the value of embedding-based excess parameters in DLRM models. We conclude that compressed DLRM models are better for fast inference and are easy to train and deploy as expected. More importantly, they can also be trained in the same overall time despite having a slower convergence rate. The critical observation is that compressed models enjoy a system advantage they can exploit, which reduces time per iteration. The paper also highlights that the correct way to compare the quality of models of highly different sizes is to run the models for equal time rather than equal iterations. 
\vspace{-0.2cm}

\bibliography{example_paper}
\bibliographystyle{unsrt}

\newpage
\appendix
\section{Appendix}
\section{Proof of Theorem \ref{thm:2}}
Let the embedding table be $E \in R^{n \times d}$. Consider a matrix $S$ which is $JLT( \epsilon / \sigma(E), \delta, 9^d + n))$ where $\sigma(E)$ denotes the maximum singular value of E. Then $M$ and $\mathcal{M}$ defined by 
\begin{align*}
    & M{=}S E \quad \mathcal{M}(M, i){=}(S e_i)^\top (M)
\end{align*}
is a $(1 \pm \epsilon) PSS$
where $e_i \in R^n$ is a one-hot encoding of integer $i$ (i.e $e_i[i] = 1$ and rest all elements of $e_i$ are 0)

\begin{proof}
Johnson Lindenstrauss transforms are a class of transformations which preserve pair wise inner products ( equivalently norms ) of a set of vectors in a space
Definition of JLT transform from \cite{woodruff} is 
A random matrix $S \in R^{k \times n}$ forms a $JLT(\epsilon, \delta, f)$ if with probability $(1-\delta)$ for any f-element subset $V \subset R^n$, for all $v_1, v_2$ it holds that 
\begin{equation}
    |\langle Sv_1, Sv_2 \rangle - \langle v_1, v_2 \rangle | \leq \epsilon ||v_1||_2 ||v_2||_2
\end{equation}

\paragraph{ Choice of S matrix and inner product preservation }

Consider the following sets of points. 
\begin{enumerate}
    \item $\{e_i\}_{i=0}^{n-1}$
    \item $1/2-net$ over the set $\{u | u \in \textrm{Column-space}(E) \; ||u|||_2=1\}$
\end{enumerate}
We use the lemma 5 from \cite{woodruff} to have the number of points in set 2 bounded by $9^d$.
Let S be a matrix that is JLT($\delta, \epsilon, (n + 9^d)$). For these $(n + 9^d)$ points with probability $(1-\delta)$, the matrix S preserves inner products as per definition of JLT given above. Let us consider 4 cases w.r.t to the following condition
\begin{equation}
        |\langle Sv_1, Sv_2 \rangle - \langle v_1, v_2 \rangle | \leq \epsilon ||v_1||_2 ||v_2||_2
    \end{equation}
    
\begin{enumerate}
    \item $v_1, v_2 \in \textrm{Column-space}(E)$. We use the same argument as given on page 12 \cite{woodruff} and conclude that condition holds for these $v_1,v_2$
    \item $v_1, v_2 \in \{e_i\}_{i=0}^{n{-}1} \cup 1/2-net$. condition holds due to JLT
    \item $v_1 \in \{e_i\}_{i=0}^{n{-}1} , v_2 \in \textrm{column-space} (E)$. Let $||v_2|| = 1$. We will prove the condition for this case and for all other non-unit-norm cases will happen due to scaling. Using argumnet from page 12 \cite{woodruff}, we can represent $v_2$ as a sum of vectors in $1/2-net$.
    \begin{equation}
        v_2 = v^0 + v^1 + v^2 + .... \; \textrm{ where } v^i \in \textrm{1/2-net}
    \end{equation} 
    such that $||v^i|| \leq \frac{1}{2^i}$
    \begin{align*}
        & |\langle Sv_1, Sv_2 \rangle - \langle v_1, v_2 \rangle | \\
        & = |\sum_i (\langle Sv_1, Sv^i \rangle) - \sum_i (\langle v_1, v^i \rangle) |\\
        & = |\sum_i (\langle Sv_1, Sv^i \rangle) - (\langle v_1, v^i \rangle) |\\
        & \leq \sum_i  (\langle Sv_1, Sv^i \rangle) - (\langle v_1, v^i \rangle) | \\
        &  \leq (\epsilon) \sum_i ||v_1||_2 \; ||v^i||_2 \\
        &  =  \epsilon \sum_i ||v^i||_2 \\
        &  \leq  \epsilon \sum_i \frac{1}{2^i} \\
        &  =  \epsilon  \\
    \end{align*}
\end{enumerate}
Thus, using the matrix S, for all $v1,v2 \in \{e_i\} \cup \textrm{column-space}(E)$ it holds that , 
\begin{equation}
    |\langle Sv_1, Sv_2 \rangle - \langle v_1, v_2 \rangle | \leq \epsilon ||v_1||_2 ||v_2||_2
\end{equation}

\paragraph{ proving the statement for $(1 \pm \epsilon)$ PSS}
Let $e_i$s be one hot encoded vectors such that we have that 
\begin{equation}
    E[i] = e_i^\top E
\end{equation}
Let $M = SE$ and $\mathcal{M}(M,i) = (SE)^\top S e_i$
\begin{align}
    &\langle \mathcal{M}(M,i) , x \rangle - \langle E[i]^\top, x \rangle \\
    & = \langle  (SE)^\top S e_i , x \rangle - \langle E[i]^\top, x \rangle \\
    & = \langle  S e_i , SEx \rangle - \langle (e_i^\top E)^\top, x \rangle
    & = \langle  S e_i , SEx \rangle - \langle e_i, Ex \rangle
\end{align}
Note that both $e_i$ and $Ex$ are vectors that belong to our f point set V. Thus

\begin{align}
    &\langle \mathcal{M}(M,i) , x \rangle - \langle E[i]^\top, x \rangle \\
    & = \langle  S e_i , SEx \rangle - \langle e_i, Ex \rangle \\
    & \leq \epsilon ||e_i||_2 ||Ex||_2 \\
    & \leq \epsilon ||Ex||_2 \\
    & \leq \epsilon \sigma(E) ||x||_2
\end{align}
Note that $||e_i||=1$ Let $\sigma$ be the max singular value of E. We can push $\sigma(E)$ into $\epsilon$ and select a transform $JLT(\epsilon / \sigma(E), \delta, (n + 9^d))$ to obtain a $\epsilon-l_2$ parameter shared setup.

\end{proof}

\section{Proof of theorem \ref{thm:1}}
Let  $(M, \mathcal{M})$ is a $(1 \pm \epsilon)$ PSS for embedding table E, we have ,
\begin{align*}
    \forall i, j \in \{0,...,n{-}1\} \quad 
    | \;\; || \mathcal{M}(M, i) - \mathcal{M}(M, j) ||_2 - || (E[i] - E[j]) ||_2 \; \; | \leq 2\epsilon
\end{align*}

\begin{proof}
Using definition of PSS
\begin{align*}
    | \langle \mathcal{M}(M, i), \mathcal{M}(M, i) \rangle - \langle \mathcal{M}(M, i), E[i] \rangle|   \leq \epsilon ||\mathcal{M}(M, i)||_2
\end{align*}

\begin{align*}
    &\langle \mathcal{M}(M, i), \mathcal{M}(M, i) \rangle  \leq  \langle \mathcal{M}(M, i), E[i] \rangle + \epsilon || \mathcal{M}(M,i)||_2 \\
    &\textrm{ and } \langle \mathcal{M}(M, i), \mathcal{M}(M, i) \rangle  \geq  \langle \mathcal{M}(M, i), E[i] \rangle - \epsilon || \mathcal{M}(M,i)||_2
\end{align*}

\begin{align*}
    &\langle \mathcal{M}(M, i), \mathcal{M}(M, i) \rangle  \leq  \langle E[i], E[i] \rangle + \epsilon (||E[i]||_2  +  || \mathcal{M}(M,i)||_2 )\\
    &\textrm{ and } \langle \mathcal{M}(M, i), \mathcal{M}(M, i) \rangle  \geq  \langle E[i], E[i] \rangle - \epsilon (||E[i]||_2 +  || \mathcal{M}(M,i)||_2)
\end{align*}

\begin{align*}
    &\langle \mathcal{M}(M, i), \mathcal{M}(M, i) \rangle  \leq  \langle E[i], E[i] \rangle + \epsilon (||E[i]||_2  +  || \mathcal{M}(M,i)||_2 )\\
    &\textrm{ and } \langle \mathcal{M}(M, i), \mathcal{M}(M, i) \rangle  \geq  \langle E[i], E[i] \rangle - \epsilon (||E[i]||_2 +  || \mathcal{M}(M,i)||_2)
\end{align*}
Let $||\mathcal{M}(M, i)|| = m$ and $||E[i]|| = e$

\begin{align*}
    m^2 - \epsilon m \leq  e^2 + \epsilon e  \textrm{ and } m^2  + \epsilon m   \geq  e^2  - \epsilon e
\end{align*}
Adding $(1/4)\epsilon^2$ on both sides

\begin{align*}
    m^2 - \epsilon m + 1/4\epsilon^2 \leq  e^2 + \epsilon e + \epsilon^2  \textrm{ and } m^2  + \epsilon m + 1/4 \epsilon^2   \geq  e^2  - \epsilon e + 1/4 \epsilon^2
\end{align*}

\begin{align*}
    (m - 1/2\epsilon)^2 \leq (e + 1/2 \epsilon)^2  \textrm{ and } (m + 1/2 \epsilon)^2   \geq  (e + 1/2 \epsilon)^2
\end{align*}

\begin{align*}
    m \leq (e + \epsilon)  \textrm{ and } m   \geq  e - \epsilon
\end{align*}
Thus,

\begin{align*}
    |m - e| 
    \leq \epsilon
\end{align*}

Thus, 
\begin{equation}
    || \mathcal{M}(M, i) ||_2 =  || E[i] ||_2 \pm \epsilon
\end{equation}

Now we can look at the pairwise distances
\begin{align*}
    & || \mathcal{M}(M, i) - \mathcal{M}(M, j) ||_2 \\
    & = || (\mathcal{M}(M, i) - E[i]) - (\mathcal{M}(M, j) - E[j]) + (E[i] - E[j]) ||_2 \\
    & \leq || (\mathcal{M}(M, i) - E[i]) ||_2 + || (\mathcal{M}(M, j) - E[j])||_2 + || (E[i] - E[j]) ||_2 \\
    & \leq 2\epsilon + || (E[i] - E[j]) ||_2
\end{align*}
    
\begin{align*}
    & || \mathcal{M}(M, i) - \mathcal{M}(M, j) ||_2 \\
    & = || (\mathcal{M}(M, i) - E[i]) - (\mathcal{M}(M, j) - E[j]) + (E[i] - E[j]) ||_2 \\
    & \geq - || (\mathcal{M}(M, i) - E[i]) ||_2 -  || (\mathcal{M}(M, j) - E[j])||_2 + || (E[i] - E[j]) ||_2 \\
    & \geq - 2\epsilon + || (E[i] - E[j]) ||_2
\end{align*}

Thus, 
\begin{align*}
    | \;\; || \mathcal{M}(M, i) - \mathcal{M}(M, j) ||_2 - || (E[i] - E[j]) ||_2 \; \; | \leq 2\epsilon
\end{align*}

\end{proof}

\newpage
\section{Data}
\subsection{PSS rigorous evaluation} \label{sec:appendix_PSS_exp}

\begin{itemize}
    \item Page 1 : memory vs compression.
    \item page 2:
    memory vs chunk size
    \item page 3:
    compression vs chunk size
\end{itemize}
Some obervations 
\begin{itemize}
    \item (sparse/dense) original embeddings should be run with sparse gradients. The dense embeddings are too time consuming as they require updating large amounts of memory (vacuously) in each iteration. Note that a lot of optimizers in deep learning libraries like pytorch do not yet have support for sparse gradient updates. 
    \item When the entire embedding table can fit on the GPU, the forward and backward times do not change much.
    \item While we can go upto 64M tokens ( for m = 128 ) on single gpu using original embeddings and sparse gradient propogation, the simulated embedding tables can be much larger with PSS and compression.
    \item Best forward times we can achieve with PSS ( around 0.27ms/iteration ) is almost 2$\times$ the forward times for original embedding lookup. (0.12 ms / iteration). This is expected since, we have to perform additional hash computations in PSS. Hence, if both original embeddings and compressed embeddings are at the same distance from the computational resource, then PSS will have a disadvantage.
    \item (compression, PSS) Higher the compression, better are timings for both backward pass (using dense propagation) and forward pass.
    \item A nice trade-off can be seen between using sparse or dense gradients with PSS. If we use dense gradients, the time in backward propogation is affected by the overall size of embedding parameters. Thus, for higher compression and smaller $n$, we have smaller times where as if n becomes larger and compression is smaller, the time taken increases. The cost of dense gradient propogation becomes quite high at sufficiently large $n$ and sufficiently small compression. On the other hand, the cost of sparse gradient propogation is uniform across different values of $n$ and compression. This is because the algorithmic complexity and memory accessed is similar. This gives us a guideline as to when to use sparse / dense gradients with PSS. Note that in our final results, we see a good overall improvement by leveraging the dense gradient propogation at high compression.
    \item Higher the chunk size better is the time in forward pass. backward pass is largely unaffected by chunk size largely due to the way it is implemented.
\end{itemize}
\begin{table}[h]
\centering

\caption{ batch size = 10240 and dimension = 128 }
\label{tab:t12}
\end{table}

\newpage

\end{document}